\newtheorem{thm}{Theorem}
\newtheorem{lem}[thm]{Lemma}
\newtheorem{cor}[thm]{Corollary}
\newtheorem{defin}[thm]{Definition}
\begin{document}

%

%

\twocolumn[

\aistatstitle{Concentric mixtures of Mallows models for top-$k$ rankings: \\ sampling and identifiability}

\aistatsauthor{ Fabien Collas  \And Ekhine Irurozki }

\aistatsaddress{ BCAM \And  BCAM, Telecom Paris } ]

\begin{abstract}
In this paper, we consider mixtures of two Mallows models for top-$k$ rankings, both with the same location parameter but with different scale parameters, i.e., a mixture of concentric Mallows models.
This situation arises when we have a heterogeneous population of voters formed by two homogeneous populations, one of which is a subpopulation of expert voters while the other includes the non-expert voters. We propose efficient sampling algorithms for Mallows top-$k$ rankings. We show the identifiability of both components, and the learnability of their respective parameters in this setting by, first, bounding the sample complexity for the Borda algorithm with top-$k$ rankings and second, proposing polynomial time algorithm for the separation of the rankings in each component. 
Finally, since the rank aggregation will suffer from a large amount of noise introduced by the non-expert voters, we adapt the Borda algorithm to be able to recover the ground truth consensus ranking which is especially consistent with the expert rankings. 
\end{abstract}

\section{INTRODUCTION}



Ranked data has been subject to study in different communities starting with Social Choice~\cite{Bartholdi1989a}, Bioinformatics~\cite{Vitelli2018}, and recently in Machine Learning~\cite{DBLP:conf/icml/Busa-FeketeHS14} since rankings arise naturally when ordering items in order of preference. Top-$k$ rankings arise in practice when voters, human or software, see all the items but provide a ranking of their most preferred $k$ items. Examples of top-$k$ rankings are the results displayed in a search engine, which contain just the top 10 most relevant, related search results, out of possibly millions of results. 

The Mallows model (MM)~\cite{gMallows,mallows} is one of the preferred distributions to model rank data. It belongs to the location-scale family since it is parametrized by a location parameter (a.k.a. central ranking) $\sigma_0$ and a non-negative scale (a.k.a. dispersion) parameter, $\theta$. The location parameter is the consensus ranking of the distribution. The probability of any other permutation decreases exponentially with its distance to $\sigma_0$, where the distance for rankings is, in general, the Kendall's-$\tau$ distance. Finally, the dispersion parameter controls the variance of this decay. For other distances see~\cite{Irurozki2016b}. 

Mixtures of MM model populations that are divided into different sub-populations, each of which can be modeled with a single MM since each is consistent with noisy realizations of their particular consensus ranking. In this paper, we study the particular context of mixtures where all the location parameters are the same $\sigma_0$. We denote this situation as \textit{concentric mixture}. 


\paragraph{Real world motivation} 
In this work, we consider the following problem. There is a consensus ranking $\sigma_0$ representing a complete ranking of a set of $n$ alternatives, i.e., films that are ordered for the preferences or students that are ranked according to their grades in a particular exam. However, this consensus ranking is unknown and must be estimated from the realization of rankings provided by a collection of $m$ raters or voters. Each voter has ranked his top $k<n$ alternatives.

The population is heterogeneous: a number of them are expert voters whose vote will be close to the consensus and the rest will provide low-quality rankings, which will be noisier than the experts'. The population of voters is modeled as a 2-components concentric mixture of MM: Both components will be centered at the same consensus ranking $\sigma_0$, but their spread parameter will be different. Our goals are to show identifiability in this scenario by (1) obtaining the consensus ranking with high probability and (2) distinguishing the experts from the non-experts rankings with high probability.

\paragraph{Our contributions}
We study here concentric mixtures of Mallows for top-$k$ rankings. Our contributions are the following:
\begin{itemize}
    \item We propose efficient algorithms to compute the probability, sample top-$k$ rankings, and sample linear extensions of top-$k$ rankings under the Mallows model.

    \item We show identifiability of concentric mixtures with the following two results. 
    \begin{itemize}
        \item We analyze the sample complexity that the Borda algorithm needs to return the consensus ranking with high probability for a sample of top-$k$ rankings.
        \item We propose an algorithm that separates the rankings from both components of the mixture with high probability. This is the key to estimate the dispersion parameters of each component. 
    \end{itemize}

    \item We propose an improvement for the Borda algorithm for the estimation of the central ranking in concentric mixtures of top-$k$ rankings. 

\end{itemize}
\paragraph{Related work}

Partially ranked data and extensions of the Kendall's-$\tau$ distance, in particular, have been analyzed extensively. In \cite{Fagin2003} a family of extensions is studied and the authors show that they are equivalent up to global constants. 
Based on this work, constant factor approximation algorithms can be found in \cite{Ailon2010}.  Mallows models for top-$k$ rankings under the distance in~\cite{Fagin2003} are given in \cite{NIPS2018_7691}, where the authors argue that previous sampling algorithms based on the Repeated Insertion Model (RIM) \cite{Regenwetter2004} can not be efficiently adapted to sample top-$k$ rankings and propose a $\mathcal{O}(k^2 4^k + k^2\log n)$ algorithm to sample top-$k$ rankings from a MM. By taking a different approach from theirs, we propose a sampling algorithm of complexity $\mathcal{O}(k\log k)$ for top-$k$ rankings under the MM. 

Theoretical identifiability of the parameters of a mixture of MM was first addressed in \cite{awasthi2014learning} after a large number of papers on practical research~\cite{D'Elia2005917,Lee20122486,Meila2010}. They obtain a polynomial time algorithm for the case of two mixtures by using tensor decomposition. Despite working with arbitrary separation of the centers of the distributions, their algorithm performance drops as the centers of the distributions get close, being able to correctly identify 10\% of the mixtures when both centers are the same.

The problem of learning mixtures of MM was also addressed in \cite{Chierichetti2015}. They propose and analyze algorithms for different mixture setting. They show identifiability when the dispersion parameter is the same and known for all the components and argue that the learnability of the problem can strongly depend on the separation of the consensus rankings of the mixtures components. In the present work, we focus on this alleged difficult setting of concentric mixtures of unknown dispersion and show that even concentric mixtures can be identified polynomially provided that the variances in both distributions are different enough. 


The first polynomial time algorithm for provably learning the parameters of a mixture of Mallows models with any constant number of components can be found in \cite{liu2018efficiently}. They show that any two mixtures of top-$k$ Mallows models whose components are far from
each other and from the uniform distribution in total variation distance, and also far from the uniform, are far from each other as mixtures too, provided that $n > 10k^2$. 
We improve their results showing that for a component close to uniformity and smaller values of $n$ the separation can be done polynomially. 

A growing body of recent papers consider simultaneously partial preferences and mixtures of probabilistic models, i.e., Plackett-Luce \cite{Mollica2017}, 
proposing provably efficient learning algorithms \cite{Liu2019}, 
sampling linear extensions \cite{zhao2019learning}, 
characterizing  identifiability \cite{zhao2020learning}. 
In this work, we extend the scenario to the Mallows model.

Mallows model belongs to the location-scale family of distributions. The most prominent member of this family is the Gaussian and therefore both Mallows and Gaussian are usually considered to be analogous. Nonetheless, Mallows lacks many interesting properties of the Gaussian. In this paper, we show that on the other hand, Mallows has interesting properties that are not present on the Gaussian such as identifiability of concentric mixtures.

This paper is organized as follows. Section~\ref{sec:perliminaries} gives background on rankings and distances. Section~\ref{sec:mm} details the Mallows models for partial permutations and shows efficient ways of dealing with them, sampling, or computing statistics. Section~\ref{sec:measure} shows how to separate both subpopulations of concentric MM. Section~\ref{sec:uBorda} addresses the problem of the estimation of the consensus ranking. Finally, Section~\ref{sec:experiments} details the experimental evaluations and Section~\ref{sec:conclusions} concludes the paper.

\section{PRELIMINARIES}\label{sec:perliminaries}
The group of permutations of $n$ items is denoted $S_n$.
The identity permutation is $e=1,2,\ldots,n$, the group operation is the composition $\sigma\cdot\pi$, denoted $\sigma\pi$, and the inverse of $\sigma$ is denoted $\sigma^{-1}$. We consider that permutation $\sigma$ represents a ranking of items, where  $\sigma(i)$ is the rank of item $i$. 


Every permutation $\sigma\in S_n$ can be uniquely represented by its \textit{inversion vector} ${\mathbf V(\sigma)=(V_1(\sigma ), \ldots, V_{n-1}(\sigma ))}$, where\footnote{We have chosen one of the different equivalent definition of inversion vectors~\cite{gMallows,Mandhani2009}. }
\begin{equation}
 V_j(\sigma) = \sum_{i>j} \mathbbm{I}[\sigma (i) < \sigma (j)] 
,
 \label{eq:kendall_decomp}
\end{equation}
$\mathbbm{I}$ is the indicator function and $0\leq  V_j(\sigma) < n-j$. There exists a quasi-linear time complexity bijection between each possible inversion vector and permutations in $S_n$~\cite{McClellan1974}. 

We consider the Kendall's-$\tau$ distance,  $d(\sigma ,\pi)$, which counts the number of pairwise disagreements between $\sigma$ and $\pi$. We use $d(\sigma)$ to denote $d(\sigma,e)$. The relation between the Kendall's-$\tau$ distance and inversion vectors comes from the fact that $d(\sigma)=\sum_j V_j(\sigma)$. For top-$k$ rankings we use a generalization of the Kendall's-$\tau$ distance that assumes that items that cannot be compared do not increase the distance. This is equivalent to the generalization of  \cite{Fagin2003} with the $p$ parameter equal to 0. 


We consider the Mallows model (MM) to model distributions on $S_n$. MM expresses the probability of ranking $\sigma\in S_n$ as $p(\sigma) \propto \exp(-\theta d(\sigma, \sigma_0))$. We will make use of the convenient observation made in previous paragraphs that claims that $d(\sigma) = \sum_j V_j(\sigma)$ to rewrite the MM as follows~\cite{meila07},

\begin{align}
\begin{aligned}
\label{eq:mm}
& p(\sigma) = \frac{\prod_{j=1}^{n-1}\exp(-\theta V_j(\sigma \sigma_0^{-1}))}{\psi_{n}} \\ & \text{where} \quad \psi_{n} = \prod_{j=1}^{n-1}  \psi_{n,j} = \prod_{j=1}^{n-1}  \frac{1-\exp(-\theta (n-j+1))}{1-\exp(-\theta)}. 
\end{aligned}
\end{align}

We denote as $M(\sigma_0, \theta)$ a MM with consensus ranking (or location parameter)  $\sigma_0$ and with dispersion parameter $\theta$. A random permutation distributed according to this model is denoted as $\sigma \sim M(\sigma_0, \theta)$. Since the Kendall's-$\tau$ distance is right invariant, we can assume that $\sigma_0=e$ w.l.o.g. Interestingly, and base of our sampling algorithm, is that the probability distribution of each item in the inversion vector $\mathbf V(\sigma\sigma_0^{-1})=(V_1(\sigma\sigma_0^{-1} ), \ldots, V_{n-1}(\sigma\sigma_0^{-1} ))$ for $\sigma \sim M(\sigma_0, \theta)$ can be expressed as 
\begin{equation}
p(V_j(\sigma\sigma_0^{-1}) = r) = \frac{\exp(-\theta r)}{\psi_{n,j}}.
\label{eq:prob_V}
\end{equation}

It follows that $p(\sigma)$ can be stated as the product of independent factors, $p(\sigma)=\prod_j p(V_j(\sigma\sigma_0^{-1}))$,~\cite{Mandhani2009}. 

Learning the maximum likelihood estimate (MLE) of a MM given a sample of permutations is done in two stages~\cite{Mandhani2009}. Firstly, the MLE of the central ranking is the Kemeny ranking of the sample~\cite{Ali2011}. Since this problem is NP-hard~\cite{Dwork:2001:RAM:371920.372165}, usually the Borda ranking is used. Borda can be computed in quasi-linear time and is an unbiased estimator of the Kemeny ranking of a sample distributed according to Mallows
~\cite{Fligner1988}. Secondly, the MLE of the dispersion parameters are obtained numerically~\cite{Irurozki2016b}.


Mixture models are used to combine different simple probability models to model large, heterogeneous populations. In our motivating problem, we consider a heterogeneous population of two subpopulations: one made of expert voters and another one of non-expert voters. This is modeled with a mixture of two concentric components where the probability of each permutation is

\begin{equation}
p(\sigma) = r \frac{\exp(-\theta_g d(\sigma, \sigma_0)) }{\psi_n} + (1-r) \frac{\exp(-\theta_b d(\sigma, \sigma_0))}{\psi_n}.
\end{equation}

where there is one consensus ranking $\sigma_0$, a dispersion parameter of the expert voters, $\theta_g$, a dispersion of the non-expert ones, $\theta_b<\theta_g$ and a proportion of expert voters in the population $r$, denoted mixture parameter. We denote mixtures in which all the components have the same central ranking as \textit{concentric}. 

\section{TOP-$k$ RANKING STATISTICS UNDER THE MALLOWS MODEL}\label{sec:mm}

In this section, we study the problems of sampling top-$k$ rankings from a MM, sampling linear extensions of top-$k$ rankings, and computing the probability of a top-$k$ ranking efficiently.

We start with the primary problem in statistics: Computing the probability of a top-$k$ {ranking $\sigma$} efficiently. 
The probability of $\sigma$ is the sum of the probabilities of all its linear extensions, ${p(\sigma) = \sum_{\sigma'\in L(\sigma)} p(\sigma')}$, so a naive approach computes $p(\sigma)$ in $\mathcal{O}((n\log n)(n-k)!)$. We propose a $\mathcal{O}(n+k\log k)$ expression to compute $p(\sigma)$ in the next lemma.

\begin{lem}\label{thm:proba}
The probability of the top-$k$ ranking $\sigma$ is $${p(\sigma) =  \exp(-\theta d(\sigma, \sigma_0) ) \frac{ \psi_{n-k,\theta}} {\psi_{n,\theta}}}.$$
\end{lem}


The complexity of the previous expression comes from the normalization constant ($\mathcal{O}(n)$, in Equation~\eqref{eq:mm}) and the computation of the distance ($\mathcal{O}(k \log k)$). A proof can be found in the appendix.

\paragraph{Sampling}
When we consider the problem of sampling complete rankings (instead of top-$k$ rankings),  RIM~\cite{Regenwetter2004} offers a convenient alternative. RIM samples a ranking in two steps: First, it samples vector $\mathbf R(\sigma) = (R_1, \ldots, R_n)$, where $1 \leq R_i \leq i$ and $p(R_i)$ for $\sigma\sim M(\sigma_0, \theta)$ is known~\cite{Regenwetter2004}. Second, starting with an empty vector $\sigma$ and letting $i$ range in $[1,n]$, it inserts item $i$ in position $R_i$ of $\sigma$, shifting backwards items if necessary. Due to this shifting strategy, $\sigma(i)$ is not known until the last iteration for every $i\leq n$. This means that the only way of sampling a top-$k$ ranking with RIM is to sample a complete ranking and then to censor it. This is clearly a terrible idea, especially when $k<<n$. There exits an improvement for this process with complexity $\mathcal{O}(k^2 4^k + k^2\log n)$~\cite{NIPS2018_7691}.

In the following lines, we introduce a new sampling algorithm for top-$k$ rankings with quasi-linear complexity. Our proposed algorithm can also be used to sample complete rankings by setting $k=n$ and its complexity improves over RIM's. Also, code for sampling is attached as supplementary material and will be made available at public repositories upon publication. 

Algorithm~\ref{alg:sampling} samples a top-$k$ ranking ${\sigma\sim M(\sigma_0, \theta)}$ in  time $\mathcal{O}(k \log k)$ and memory $\mathcal{O}(k)$. It is based on the following results: First, Equation~\eqref{eq:prob_V} gives the probability of each position of the inversion vector independently $\mathbf V(\pi\sigma_0^{-1})$: sampling the first $k$ positions is linear in $k$. Second, it generates the partial permutation $\pi\sigma_0^{-1}$ from the inversion vector with a quasi-linear time bijection~\cite{McClellan1974}. Finally, the top-$k$ ranking $\sigma$ is distributed as $\sigma\sim M(\sigma_0, \theta)$ where $\sigma = \pi^{-1}$ \cite{mallows}.


  \begin{algorithm}[h]
  \caption{Sample top-$k$ in $O(k \log k)$}
  \label{alg:sampling}
\KwData{$n$, $k$, $\theta$, $\sigma_0$}
\KwResult{$\sigma$: Top-$k$ ranking of $n$ items distributed according to $M(\sigma_0, \theta)$}
 initialization\;
 \For{$j\in[1,k]$}{
 $V_j(\pi\sigma_0^{-1}) =$ random choice in $[n-j]$ with choice probabilities of Eq.~\eqref{eq:prob_V}\;
  $\pi\sigma_0^{-1}$ =  transform  $V(\pi\sigma_0^{-1})$ with the bijection in~\cite{McClellan1974} \;
  \Return $\pi^{-1}$
 }
  \end{algorithm}

\paragraph{Sampling linear extensions}
A similar approach is used to sample a linear extension of a top-$k$ ranking in  $\mathcal{O}((n-k) \log (n-k))$, we refer to the supplementary material for the pseudo-code. It follows directly from the previous result. Sampling a linear extension is done by sampling $V_j(\sigma)$ from $j\geq k$. Then, obtaining $\sigma$ is $\mathcal{O}((n-k) \log (n-k))$~\cite{McClellan1974}.

Expressions for the expected distance and variance are included in the appendix for completeness and also appear in~\cite{busa2019optimal}.

\section{IDENTIFIABILITY OF CONCENTRIC MIXTURES}
Identifiability of concentric mixtures has been claimed to be the most difficult scenario in mixture identifiability~\cite{Chierichetti2015}. Indeed, the concentric mixtures of Normal components, which belongs to the same family as the MM, are known to be non-identifiable. 

Identifiability is guaranteed if we can (i) recover the ground truth ranking and (2) separate the rankings in each population. Each of these points is addressed in the following sections to claim identifiability for concentric mixtures.

\subsection{Provably separation of the sub-populations of each component}\label{sec:measure}

In this section we consider the problem of separating the rankings of the two components of a concentric mixture of MM. We propose an algorithm that  can separate the two sub-populations under mild conditions of the separation of the dispersion parameters among the mixture components and the mixture parameter $r$. Our proposed algorithm is based on finding the separation of the mean distance of each top-$k$ ranking $\sigma$ to all the others in the sample, which is defined as follows, 
\begin{equation} 
    \delta_{\sigma} = \frac{1}{|S|-1}\sum_{\sigma'\in S \setminus \lbrace \sigma \rbrace} d(\sigma, \sigma').
    \label{eq:weight}
\end{equation}

Recall that the expected distances $\mathbb{E}[d(\sigma, \sigma_0)]$ and $\mathbb{E}[\delta_{\sigma}]$ for $\sigma\sim M(\sigma_0, \theta)$ only depend on $\theta$. 
We will use the following observation along this section~\cite{Korba2017}:
The expected distance between a random Mallows ranking $\sigma \sim M(\sigma_0, \theta)$ and the consensus ranking $\sigma_0$, $\mathbb{E}[d(\sigma_0,\sigma)]$, is bounded by the expected pairwise distance of two random Mallows permutations, $\sigma, \sigma' \sim M(\sigma_0, \theta)$ as follows. 

\begin{equation}
\frac{1}{2}\mathbb{E}[d(\sigma',\sigma)] \leq \mathbb{E}[d(\sigma_0,\sigma)] \leq \mathbb{E}[d(\sigma',\sigma)].
\label{eq:bound_clemencon}
\end{equation}

First, we introduce some notations recalling our motivating example: a concentric mixture represents a population of voters with two homogeneous sub-populations, one of expert voters distributed according to $M(\sigma_0, \theta_g)$ and one of non-expert voters $M(\sigma_0, \theta_b)$ for $\theta_g>\theta_b$. Let $\beta\sim M(\sigma_0,\theta_b)$ and $\gamma\sim M(\sigma_0,\theta_g)$ be two random top-$k$ rankings, i.e., $\beta$ is a vote cast by a non-expert (bad) voter and $\gamma$ a vote cast by an expert (good) voter. In this section we show that $\mathbb{E}[\delta_\beta]$ and $\mathbb{E}[\delta_\gamma]$ are well separated and give an algorithm that separates the two sub-populations in ${\mathcal{O}(m^2)}$ time provided that the components are sufficiently far from each other.

Now, we show an auxiliary lemma that bounds the expected distance between a random expert $\gamma$ ranking and a random non-expert $\beta$ ranking. 
\begin{lem}
\label{lem:upper_bound_mixed}
The expected distance between two rankings of different components $ \mathbb{E}[d(\beta,\gamma)]$ is bounded as follows 
\begin{equation}
\mathbb{E}[d(\beta,\sigma_0)] \leq 
\mathbb{E}[d(\beta,\gamma)] \leq \mathbb{E}[d(\beta',\beta)].
\end{equation}
\end{lem}

The following result shows that the expected mean distances of the expert and non-expert top-$k$ rankings are well separated. 

\begin{thm}\label{thm:2pops}
Let $M(\sigma_0, \theta_g)$ and $M(\sigma_0, \theta_b)$ be the two components of a concentric mixture of top-$k$ rankings in which $\mathbb{E}[d(\beta, \sigma_0)] \ge c \cdot \mathbb{E}[d(\gamma, \sigma_0)]$ for $c \ge 2$. Let $r \in [0,1]$ be the mixture parameter. 
The expected mean distance between rankings of different components $\beta\sim M(\sigma_0,\theta_b)$ and $\gamma\sim M(\sigma_0,\theta_g)$ is bounded as follows,

\begin{equation}
\mathbb{E}[\delta_{\beta}-\delta_{\gamma}] \ge \mathcal{O}( c \cdot r \cdot \mathbb{E}[d(\gamma, \sigma_0)]).
\end{equation}
\end{thm}

Theorem~\ref{thm:2pops} suggests that a clustering algorithm in $\delta_{\sigma}$ for every $\sigma\in S$ can separate the population of expert voters from non-expert voters. We show that, indeed, a single linkage clustering algorithm can separate the sub-populations with proven guarantees. 

\begin{thm}\label{thm:clustering}
Let $M(\sigma_0, \theta_g)$ and $M(\sigma_0, \theta_b)$ be the two components of a concentric mixture of top-$k$ rankings in which $\mathbb{E}[d(\beta, \sigma_0)] \ge c \cdot \mathbb{E}[d(\gamma, \sigma_0)]$ for $c > 2$. Let $r \in (0,1]$ be the mixture parameter. 
There exists an algorithm that separates the samples from both components with probability $1- \epsilon$ in $\mathcal{O}(m^2)$ when the number of samples is at least 
\begin{equation}
    m > \left( \frac{n(n-1)}{ (c-2) \cdot r \cdot \mathbb{E}[\gamma, \sigma_0] } \right)^2 \frac{\log(2/\epsilon)}{2}.
\end{equation}
\end{thm}


Hence we have shown that under certain conditions about the expected distances of both populations, we can separate both components with high probability. This would allow us, given that we also know the central permutation, to learn $\theta_g$ and $\theta_b$ dispersion parameters, using the maximum likelihood estimation as described in \cite{Irurozki2014}, adapted to the case of top-$k$ rankings here.

Moreover, while we need the expected distances of both populations to be separated, we can deal with cases where one of the distribution is very close to uniformity, which can be a problematic case for some approaches~\cite{liu2018efficiently}. 

Finally, computing $\delta_{\sigma}$ for every permutation in the sample requires $\mathcal{O}(m^2)$ distance calculations. However, in practice, we can approximate this value with high probability, reducing the number of distance computations using a result from a corollary of Hoeffding's bound. 

\subsection{Estimating the consensus ranking of top-$k$ rankings}\label{sec:uBorda}

In this section we study the following problem: given a sample of top-$k$ rankings distributed as a concentric mixture of MM, find the MLE of the central ranking. Since all the components of a concentric mixture have the same consensus ranking $\sigma_0$, this problem boils down to a rank aggregation problem, as in the single-component case: the MLE is exactly given by the Kemeny ranking. Unfortunately, computing this ranking is NP-hard for $n>4$,
~\cite{Dwork:2001:RAM:371920.372165}. 

For a sample of complete rankings drawn from a MM, Borda is an approximation to the Kemeny ranking~\cite{Fligner1988}. Moreover, Borda is quasi-linear in time and outputs the correct $\sigma_0$ w.h.p. with a polynomial number of samples~\cite{Caragiannis2013}. However, there is no quality result for the case in which the sample consists of top-$k$ rankings i.i.d. from a MM. In the next lines, we provide a sample complexity for Borda over top-$k$ rankings from a MM and then extend it to the case of concentric mixtures.

A crucial difference between complete and top-$k$ rankings drawn from a MM\footnote{We assume, as in previous sections and w.l.o.g., that $\sigma_0=e$.}, is that in top-$k$ Mallows rankings $\sigma$ the probability of observing item $i$, i.e., $p(\sigma(i)\leq k)$, decreases with $i$. Intuitively, this means that Borda will need fewer samples to \textit{guess the rank} of smaller $i$'s. We formalize this intuition in the next result: We bound the number of samples that Borda requires to rank items $i$ and $i+1$ in the correct order with probability $1-\epsilon$.

For this analysis, we first need the following definition. 

\begin{defin}\label{thm:delta}
Let $i\in[0,n-1]$, $k\in[0,n]$.
\begin{equation}
\Delta^{ik} =  \sum_{\sigma:\sigma(i) \leq k}p(\sigma) - \sum_{\sigma:\sigma(i+1)\leq k}p(\sigma).
\end{equation}
\end{defin}

Despite in general there is no closed-form expression for $\Delta^{ik}$, we can give a convenient expression for the case where $i=1$, $\Delta^{1k}$.

\begin{lem} \label{thm:min_Delta}
The minimum value for $\Delta^{ik}$,  $\Delta^{1k}=\min_{i}\Delta^{ik}$ can be computed in $O(k^2+kn)$ as follows: 
\begin{align}
\begin{aligned}
\Delta^{1k} = 
 & \sum_{r_1=0}^{k-1}  \frac{\exp(-\theta r_1) }{\psi_{n,1}} \\
 & - \sum_{r_1=0}^{k-1} \sum_{r_2=0}^{k-2} \frac{\exp(-\theta r_1) \exp(-\theta r_1)}{\psi_{n,1}\psi_{n,2}} \\
 & - \sum_{r_1=k}^{n} \sum_{r_2=0}^{k-1}  \frac{\exp(-\theta r_1) \exp(-\theta r_2)}{\psi_{n,1}\psi_{n,2}}.
\end{aligned}
\end{align}
\end{lem}

Based on the above results, we can bound the sample complexity of Borda for giving the correct ordering of items $i$ and $i+1$. 

\begin{thm}\label{thm:sample_complexity_expertness}
Let $S$ be a sample of top-$k$ rankings drawn from a MM with dispersion $\theta$. Borda for sample $S$ orders the pair of items $i$ and $i+1$ correctly with probability $1-\epsilon$ when the number of rankings in the sample is at least
\begin{equation}
\begin{split}
m  \geq \mathcal{O} \bigg(  n^2 \log \epsilon^{-1} \Big (\frac{ k^2 (1-\exp(-\theta))^2}{1-\exp(-\theta n)} - i \Delta^{1k} \Big ) ^{-2} \bigg). \\
\end{split}
\label{eq:sample_complexity}
\end{equation}
\end{thm}

The above theorem formalizes the intuitive idea that the sample complexity of Borda to order correctly items $i$ and $i+1$ on a sample of top-$k$ rankings increases polynomially with $i$. Therefore, this bound generalizes to the sample complexity of Borda to obtain the correct ranking w.h.p. for a sample of top-$k$ drawn from a mixture of MM with parameters $\sigma_0$, $\theta_g$ and $\theta_b$.
\begin{cor}
Borda returns the correct central ranking $\sigma_0$ when the number of samples in the non-expert component satisfies Equation~\eqref{eq:sample_complexity} with $i=n$ and $\theta=\theta_b$. 
\end{cor}


\section{EXPERIMENTS}\label{sec:experiments}

In this section, we validate empirically our proposal. The experimental framework is as follows. In the first two experiments, we generate a sample of partial rankings, using Algorithm~\ref{alg:sampling}, with parameters $n=30$ and $k=10$, from a mixture of concentric MM, both centered at a random $\sigma_0$ and with two dispersion parameters, $\theta_b, \theta_g$. The ratio of experts voters is denoted $r$. 

\subsection{Voters separability}\label{sec:clustering_experiment}

The goal of this experiment is to separate the two components of rankings based on the mean distance of each of the rankings to all the others as defined in Equation~\eqref{eq:weight}. To compute these distances, we used the generalization of Kendall's-$\tau$ distance described in Section~\ref{sec:perliminaries}.

The experimental framework is as follows: 
\begin{itemize}
    \item $m_g = 40$ rankings from a $M(\sigma_0, \theta_g)$ such that $\mathbb{E}[d(\gamma,\sigma_0)] \in \{3, 8, 13, \ldots, 48\}$
    \item $m_b = 60$ rankings from a $M(\sigma_0, \theta_b)$ such that $\mathbb{E}[d(\beta, \sigma_0)] = c \cdot \mathbb{E}[d(\gamma,\sigma_0)]$ with $40 > c \ge 3$ and $\mathbb{E}[d(\gamma,\sigma_0)] \le 217$ (bound corresponding to the uniform distribution).
\end{itemize}

For a sample distributed as the model detailed above, we compute each $\delta_{\sigma}$ and apply a 2-means clustering. The error is measured as the percentage of badly-separated voters in the sample. 

We can observe the results of this experiment in Figure~\ref{fig:foobar} (a). As we can see, as the ratio $c$ increases, the separation gets more accurate. Indeed, for $c \ge 9$ the percentage of wrongly separated rankings is almost always below $10\%$ and is very close to $0$ when non-expert voters are close to the uniform distribution. These results are consistent with Theorem~\ref{thm:2pops}.

\subsection{Consensus estimate with expert Borda}

\begin{figure*}
    \centering
    \subfigure[]{\includegraphics[align=c,width=0.3\textwidth]{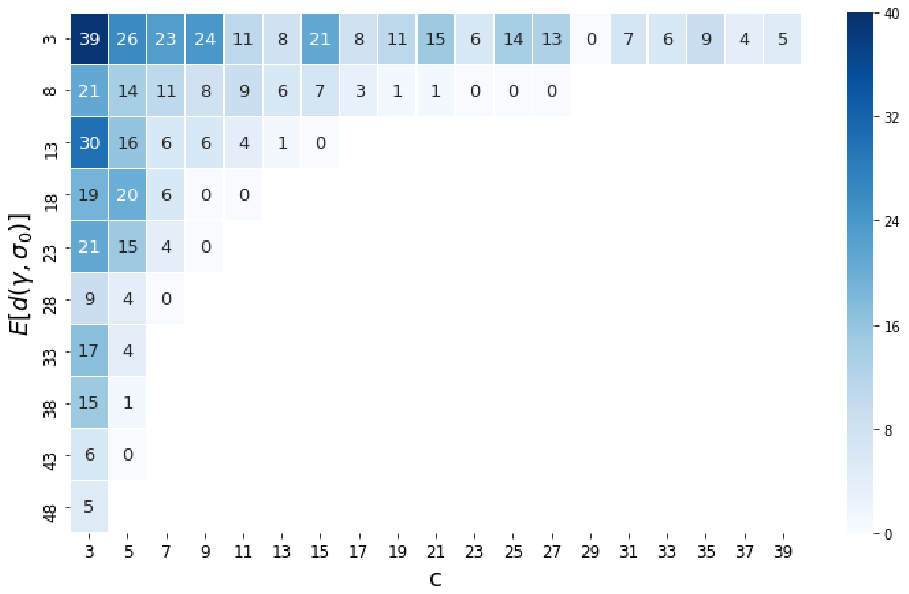}}    \subfigure[]{\includegraphics[align=c,width=0.3\textwidth]{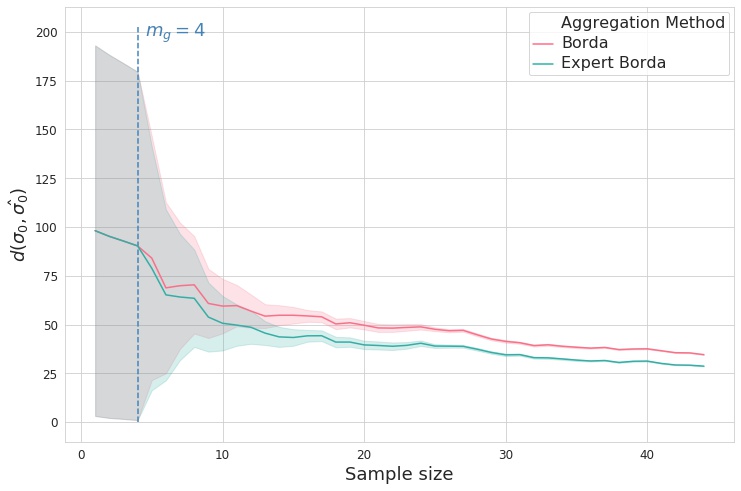}} 
    \subfigure[]{\includegraphics[align=c,width=0.3\textwidth]{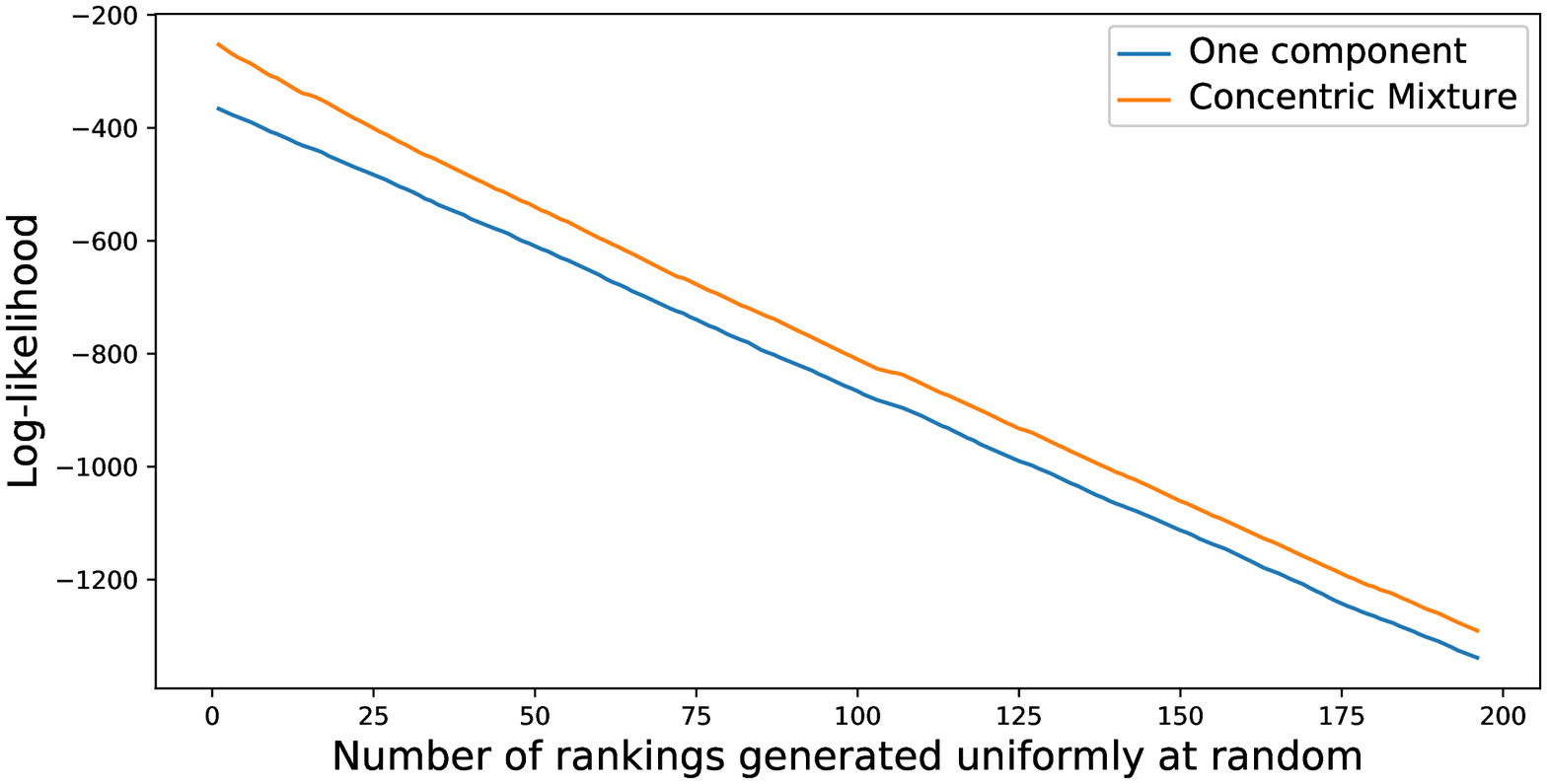}} 
    \caption{(a) Total error (\%) of separation between expert and non-expert voters, using K-means. (b) Distances between the consensus and its estimates, with different sizes of samples and two aggregation methods. (c) Comparison of the log-likelihoods of a distribution of rankings, considering it as a single component of Mallows Model or as a concentric mixture of Mallows Models with different dispersion parameters}
    \label{fig:foobar}
\end{figure*}

Our goal here is to estimate $\sigma_0$ for a sample drawn from a mixture of two concentric Mallows. We show that, with this type of population, there exists a better alternative to Borda, using the separability of the voters. 
    
Indeed, we build this alternative based on the following observations:
(1) The sample complexity is smaller for expert voters than for non-experts and 
(2) we can identify expert voters with a high probability as shown in Section~\ref{sec:measure}. 
With this in hand, we propose an aggregation method that (1) identifies expert voters (2) constructs a top-$k$ ranking by aggregating experts' rankings, and (3) fills the missing $n-k$ positions with the data of the whole sample of rankings.
We denote this approach expert Borda ($eBorda$) and show its efficiency in the present experiment. 

Hence, we sample a population of voters following a mixture of two Mallows model. We will take $m_g=4$ expert voters from $M(\sigma_0,\theta_g)$, and $m_b = 40$ non-expert voters from $M(\sigma_0,\theta_b)$. $\theta_g$ and $\theta_b$ are chosen such that $\mathbb{E}[d(\sigma_0,\gamma)]=10$ and $\mathbb{E}[d(\sigma_0,\gamma)]= 75$.

The estimate $\hat{\sigma_0}$ for $\sigma_0$ is computed with both the Borda method and with our proposed $eBorda$, using the same growing sample, with size $\{1, 2, 3, ..., 44\}$: first 1 ranking, then 2, ... starting with the experts' rankings. For each number of rankings, we measure the error of the estimate, $d(\sigma_0, \hat{\sigma_0})$, as the mean between the maximum and the minimum distance the estimate could have to the consensus if all its positions were filled. This is repeated ten times and average values for the distances are given. 

The results are given in Figure~\ref{fig:foobar} (b), where the $x$-axis indicates the number of rankings considered for the estimation, and the $y$-axis gives the error of estimation. The vertical line marks the step from which non-expert rankings are added to the sample. Finally, the aggregation of top-$k$ rankings results on a single top-$k'$ ranking where $k \le k' \le n$. When measuring our top-$k'$ estimate to our complete $\sigma_0$ consensus, not all the pairs can be compared. The shadow around the curve is the bound on the distances between any linear extension of the partial estimate and the consensus. Hence the larger $k'$, the smaller the shadow.   

As expected, we can observe that $eBorda$ can perform better than Borda. Indeed, the separation of both components allows us to keep a more accurate estimate for the first $k$ positions, using only the expert rankings to estimate it. Nevertheless using the non-expert rankings afterward allows us to complete the estimate into a full ranking, making the uncertainty of the error decrease faster as we can see with the shadows around the curves narrowing. 

\subsection{Real data example}

To test the identifiability on real data, we used a dataset already used in~\cite{gMallows}, for which 98 college students were asked to rank five words according to its strength of association with the word ``idea". The five words to classify were: (1) thought, (2) play, (3) theory, (4) dream, and (5) attention. These were to be ranked from $1$ to $5$, $5$ being the most strongly associated with the target word. In our present example, $m=98$ and $n=5$.  

We assumed the dataset to be distributed according to a Mallows Model and estimated its Maximum Likelihood Estimates, $\sigma_0 = 5, 1, 4, 3, 2$ and $\theta_g = 1.43$. We then simulated a sample of $2 \cdot m$ raters generated uniformly at random.

Simulated raters were added, one by one, and at each step, two different models were fitted: (1) a MM and (2) a mixture of concentric MM, for which each component is determined performing a $2-Means$ clustering, using the same procedure as in Section~\ref{sec:clustering_experiment}. Then, we compute the log-likelihood of each model. The results are represented in Figure~\ref{fig:foobar} (c), we can see that the mixture of concentric MM fits the data better, even in this case in which the number of random is large, larger than $n!$.

\section{CONCLUSIONS}\label{sec:conclusions}
In this paper, we have studied the alleged most difficult setting in the learnability of mixtures of location-scale distributions: the case in which the location parameters are the same. We denote this case as concentric and focus on the Mallows model for top-$k$ rankings. This situation arises in the case where we have two populations of voters, one expert in the field of the rating and the other non-expert. 

We have proposed a $\mathcal{O}(k \log k)$ sampling algorithm for top-$k$ ranking, which dramatically reduces the requirement of the samplers in the literature. 

We have also proposed an algorithm for the learnability of the parameters of the concentric mixture of top-$k$ rankings with a high probability in polynomial time. It is based on our two following results: We have bounded the sample complexity of the Borda algorithm to recover the ground truth consensus ranking. And second, we have been able to separate the rankings of each component in polynomial time with high probability. 

Interesting extensions to our work could be to generalize our results to concentric Mallows mixtures of more than two components, non-concentric mixtures of Mallows model, or other models such as Plackett-Luce's model. 

\bibliographystyle{plain}
\bibliography{mendeley}

\begin{thebibliography}{10}

\bibitem{Ailon2010}
Nir Ailon.
\newblock {Aggregation of partial rankings, p-ratings and top-m lists}.
\newblock {\em Algorithmica (New York)}, 2010.

\bibitem{Ali2011}
Alnur Ali and Marina Meila.
\newblock {Experiments with Kemeny ranking: What works when?}
\newblock {\em Mathematical Social Sciences}, 64(1):28--40, 2012.

\bibitem{awasthi2014learning}
Pranjal Awasthi, Avrim Blum, Or~Sheffet, and Aravindan Vijayaraghavan.
\newblock {Learning mixtures of ranking models}.
\newblock In {\em Advances in Neural Information Processing Systems}, pages
  2609--2617, 2014.

\bibitem{Bartholdi1989a}
J.~J. Bartholdi, C.~A. Tovey, and M.~A. Trick.
\newblock {The computational difficulty of manipulating an election}.
\newblock {\em Social Choice and Welfare}, 1989.

\bibitem{busa2019optimal}
R{\'{o}}bert Busa-Fekete, Dimitris Fotakis, Bal{\'{a}}zs Sz{\"{o}}r{\'{e}}nyi,
  and Manolis Zampetakis.
\newblock {Optimal Learning of Mallows Block Model}.
\newblock {\em arXiv preprint arXiv:1906.01009}, 2019.

\bibitem{DBLP:conf/icml/Busa-FeketeHS14}
R{\'{o}}bert Busa-Fekete, Eyke H{\"{u}}llermeier, and Bal{\'{a}}zs
  Sz{\"{o}}r{\'{e}}nyi.
\newblock {Preference-Based Rank Elicitation using Statistical Models: The Case
  of Mallows}.
\newblock In {\em Proceedings of the 31th International Conference on Machine
  Learning, (ICML)}, pages 1071--1079, 2014.

\bibitem{Caragiannis2013}
Ioannis Caragiannis, Ariel~D Procaccia, and Nisarg Shah.
\newblock {When Do Noisy Votes Reveal the Truth?}
\newblock In {\em Proceedings of the Fourteenth ACM Conference on Electronic
  Commerce}, EC '13, pages 143--160, New York, NY, USA, 2013. ACM.

\bibitem{NIPS2018_7691}
Flavio Chierichetti, Anirban Dasgupta, Shahrzad Haddadan, Ravi Kumar, and
  Silvio Lattanzi.
\newblock {Mallows Models for Top-k Lists}.
\newblock In S~Bengio, H~Wallach, H~Larochelle, K~Grauman, N~Cesa-Bianchi, and
  R~Garnett, editors, {\em Advances in Neural Information Processing Systems
  31}, pages 4382--4392. Curran Associates, Inc., 2018.

\bibitem{Chierichetti2015}
Flavio Chierichetti, Anirban Dasgupta, Ravi Kumar, and Silvio Lattanzi.
\newblock {On Learning Mixture Models for Permutations}.
\newblock In {\em Proceedings of the 2015 Conference on Innovations in
  Theoretical Computer Science}, ITCS '15, pages 85--92, New York, NY, USA,
  2015. ACM.

\bibitem{D'Elia2005917}
Angela D'Elia and Domenico Piccolo.
\newblock {A mixture model for preferences data analysis}.
\newblock {\em Computational Statistics {\&} Data Analysis}, 49(3):917--934,
  2005.

\bibitem{Regenwetter2004}
Jean-Paul Doignon.
\newblock {The repeated insertion model for rankings: Missing link between two
  subset choice models}.
\newblock {\em Psychometrika}, 69(1):33--54, 2004.

\bibitem{Dwork:2001:RAM:371920.372165}
Cynthia Dwork, Ravi Kumar, Moni Naor, and D~Sivakumar.
\newblock {Rank aggregation methods for the Web}.
\newblock In {\em International conference on World Wide Web}, WWW '01, pages
  613--622, New York, NY, USA, 2001. ACM.

\bibitem{Fagin2003}
Ronald Fagin, Ravi Kumar, and D.~Sivakumar.
\newblock { Comparing Top k Lists }.
\newblock {\em SIAM Journal on Discrete Mathematics}, 2003.

\bibitem{gMallows}
Michael~A Fligner and Joseph~S Verducci.
\newblock {Distance based ranking models}.
\newblock {\em Journal of the Royal Statistical Society}, 48(3):359--369, 1986.

\bibitem{Fligner1988}
Michael~A Fligner and Joseph~S Verducci.
\newblock {Multistage Ranking Models}.
\newblock {\em Journal of the American Statistical Association},
  83(403):892--901, 1988.

\bibitem{Irurozki2014}
Ekhine Irurozki.
\newblock {Sampling and learning distance-based probability models for
  permutation spaces}.
\newblock pages 42--44, 2014.

\bibitem{Irurozki2016b}
Ekhine Irurozki, Borja Calvo, and Jose~A. Lozano.
\newblock {PerMallows: An R package for mallows and generalized mallows
  models}.
\newblock {\em Journal of Statistical Software}, 71, 2019.

\bibitem{Korba2017}
Anna Korba, Stephan Cl{\'{e}}men{\c{c}}on, and Eric Sibony.
\newblock {A learning theory of ranking aggregation}.
\newblock In {\em Proceedings of the 20th International Conference on
  Artificial Intelligence and Statistics, AISTATS 2017}, 2017.

\bibitem{Lee20122486}
Paul~H Lee and Philip L~H Yu.
\newblock {Mixtures of weighted distance-based models for ranking data with
  applications in political studies}.
\newblock {\em Computational Statistics {\&} Data Analysis}, 56(8):2486--2500,
  2012.

\bibitem{liu2018efficiently}
Allen Liu and Ankur Moitra.
\newblock {Efficiently learning mixtures of Mallows models}.
\newblock In {\em 2018 IEEE 59th Annual Symposium on Foundations of Computer
  Science (FOCS)}, pages 627--638. IEEE, 2018.

\bibitem{Liu2019}
Ao~Liu, Zhibing Zhao, Chao Liao, Pinyan Lu, and Lirong Xia.
\newblock {Learning Plackett-Luce Mixtures from Partial Preferences}.
\newblock {\em Proceedings of the AAAI Conference on Artificial Intelligence},
  2019.

\bibitem{mallows}
C~L Mallows.
\newblock {Non-null ranking models}.
\newblock {\em Biometrika}, 44(1-2):114--130, 1957.

\bibitem{Mandhani2009}
Bhushan Mandhani and Marina Meila.
\newblock {Tractable Search for Learning Exponential Models of Rankings}.
\newblock {\em Journal of Machine Learning Research}, 5:392--399, 2009.

\bibitem{McClellan1974}
Michael~T. McClellan, Jack Minker, and Donald~E. Knuth.
\newblock {The Art of Computer Programming, Vol. 3: Sorting and Searching}.
\newblock {\em Mathematics of Computation}, 1974.

\bibitem{Meila2010}
Marina Meila and Harr Chen.
\newblock {Dirichlet Process Mixtures of Generalized Mallows Models}.
\newblock In {\em Uncertainty in Artificial Intelligence (UAI)}, pages
  285--294, 2010.

\bibitem{meila07}
Marina Meila, Kapil Phadnis, Arthur Patterson, and Julian Bilmes.
\newblock {Consensus ranking under the exponential model}.
\newblock In {\em Uncertainty in Artificial Intelligence (UAI)}, pages
  285--294, Corvallis, Oregon, 2007.

\bibitem{Mollica2017}
Cristina Mollica and Luca Tardella.
\newblock {Bayesian Plackett–Luce Mixture Models for Partially Ranked Data}.
\newblock {\em Psychometrika}, 2017.

\bibitem{Vitelli2018}
Valeria Vitelli, {\O}ystein S{\o}rensen, Marta Crispino, Arnoldo Frigessi, and
  Elja Arjas.
\newblock {Probabilistic preference learning with the Mallows rank model}.
\newblock {\em Journal of Machine Learning Research}, 18(1), 2018.

\bibitem{zhao2019learning}
Zhibing Zhao and Lirong Xia.
\newblock {Learning Mixtures of Plackett-Luce Models from Structured Partial
  Orders}.
\newblock In {\em Advances in Neural Information Processing Systems}, pages
  10143--10153, 2019.

\bibitem{zhao2020learning}
Zhibing Zhao and Lirong Xia.
\newblock {Learning Mixtures of Plackett-Luce Models with Features from
  Top-{\$} l {\$} Orders}.
\newblock {\em arXiv preprint arXiv:2006.03869}, 2020.

\end{thebibliography}

\end{document}


%

%

\onecolumn
\aistatstitle{Instructions for Paper Submissions to AISTATS 2021: \\
Supplementary Materials}
\section{Proof of Lemma~\ref{thm:proba}}

\begin{proof}
Let the linear extensions of $\sigma$ be $L(\sigma)$. The sum of their probabilities is 

\begin{equation}
\begin{split}
p&(\sigma) = \sum_{\sigma'\in L(\sigma)} p(\sigma') 
= \sum_{\sigma'\in L(\sigma)}  \frac{ \prod_{j=1}^{n-1} \exp(-\theta V_j(\sigma'))}{\psi_{n,\theta}} \\
&= \sum_{\sigma'\in L(\sigma)}  \frac{ \prod_{j=1}^{k} \exp(-\theta V_j(\sigma))  \prod_{j=k+1}^{n-1} \exp(-\theta V_j(\sigma'))}{\psi_{n,\theta}} \\
&=  \frac{  \exp(-\theta d(\sigma)) \sum_{\sigma'\in L(\sigma)} \prod_{j=k+1}^{n-1} \exp(-\theta V_j(\sigma'))}{\psi_{n,\theta}} \\
&= \frac{ \exp(-\theta d(\sigma)) \psi_{n-k,\theta} }{\psi_{n,\theta}},
\end{split}
\end{equation}
where $\psi_{k,\theta}$ is defined in Equation (2). The overall complexity is dominated by $\psi_{n,\theta}$, which is $\mathcal O(n)$
\end{proof}

\section{Sampling linear extensions}
 
  \begin{algorithm}[H]
    \caption{Sampling linear extensions in $\mathcal{O}((n-k) \log (n-k))$}
    \label{alg:linear_extension}
    \KwData{$n$, $k$, $\theta$, $\sigma'$}
    \KwResult{$\sigma$: Full ranking}
    $V_j(\sigma) =$ bijection from $\sigma'$\;
    \For{$j\in[k+1,n]$}{
     $V_j(\sigma) =$ random choice in $[n-j]$ with choice probabilities of Eq. (3)\;
      $\sigma$ = transform  $V(\sigma)$ with the bijection in~\cite{McClellan1974}\;
      \Return $\sigma^{-1}$
     }
  \end{algorithm}

Along the section, we have made use of the following result.


\begin{lem} \label{thm:inverse}
Let $\sigma\in S_n^k$ where $\sigma \sim M(\sigma_0, \theta)$. Then, $\sigma^{-1}$ is a top-$k$ ranking distributed according to the same distribution, $\sigma^{-1} \sim M(\sigma_0, \theta)$, and $d(\sigma) = d(\sigma^{-1}) $.
\end{lem}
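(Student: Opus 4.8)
The plan is to deduce the entire statement from a single classical fact — that the number of inversions of a permutation is unchanged under inversion — and then to carry that invariance through the Mallows normalization and the top-$k$ bookkeeping. Throughout I take the central ranking to be the identity, $\sigma_0=e$, as is standard for the model and as the preceding derivations implicitly do; under this convention $d(\sigma)$ is just the inversion count of $\sigma$ (equivalently $\sum_j V_j(\sigma)$). The three steps are: (i) the distance identity $d(\sigma)=d(\sigma^{-1})$; (ii) the distributional identity $\sigma^{-1}\sim M(\sigma_0,\theta)$; and (iii) closure of $S_n^k$ under inversion. I expect step (iii) to be the real obstacle, while (i) and (ii) are short.

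First I would prove $d(\sigma)=d(\sigma^{-1})$ by exhibiting a bijection between the inverted pairs of $\sigma$ and those of $\sigma^{-1}$. A pair $a<b$ with $\sigma(a)>\sigma(b)$ is sent to the pair $(\sigma(b),\sigma(a))$: here $\sigma(b)<\sigma(a)$, while $\sigma^{-1}(\sigma(b))=b>a=\sigma^{-1}(\sigma(a))$, so $(\sigma(b),\sigma(a))$ is an inverted pair of $\sigma^{-1}$; the assignment is manifestly invertible. Hence $\mathrm{inv}(\sigma)=\mathrm{inv}(\sigma^{-1})$ and therefore $d(\sigma)=d(\sigma^{-1})$.

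Next I would settle the distributional claim. By Lemma~\ref{thm:proba} the mass that $M(\sigma_0,\theta)$ assigns to a top-$k$ ranking depends on it only through its distance, namely $p(\sigma)=\exp(-\theta d(\sigma))\,\psi_{n-k,\theta}/\psi_{n,\theta}$. Since inversion is an involution on the set of top-$k$ rankings, for any $\tau\in S_n^k$ we get $\Pr[\sigma^{-1}=\tau]=\Pr[\sigma=\tau^{-1}]=p(\tau^{-1})=p(\tau)$, where the last equality uses $d(\tau^{-1})=d(\tau)$ from the previous step. Thus $\sigma^{-1}$ has exactly the law $M(\sigma_0,\theta)$, provided inversion really does map $S_n^k$ into itself.

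That proviso is where the actual work lies, and it is the step I expect to be hardest. One must verify that the inverse of a top-$k$ ranking is again a top-$k$ ranking: concretely, $\sigma$ and $\sigma^{-1}$ encode the \emph{same} partial order, one as an ordering (rank-to-item) and the other as a ranking (item-to-rank), so the information retained by the top-$k$ truncation is preserved and $\sigma^{-1}$ meets the constraint defining $S_n^k$. The subtlety is that the convenient identity $d=\sum_{j\le k}V_j$ is tied to the canonical representative in one of the two representations, so I would establish the distance equality at the representation-independent level of inversion counts (as in step (i)) rather than by matching truncated inversion vectors term by term, since those need not agree. Once closure of $S_n^k$ under inversion is in hand, both the distance identity and the distributional identity follow immediately from the two computations above.
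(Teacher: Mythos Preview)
Your approach is essentially the paper's: both reduce the distributional claim to the distance identity $d(\sigma)=d(\sigma^{-1})$ together with closure of $S_n^k$ under inversion, and both treat the $\sigma_0=e$ case as the core. You actually supply more than the paper does, giving an explicit bijection on inverted pairs where the paper simply asserts $d(\sigma)=d(\pi)$, and you flag the closure step as requiring care where the paper just states it.

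The one substantive difference is that the paper does not stop at $\sigma_0=e$: it adds a short paragraph for general $\sigma_0$, invoking right-invariance of Kendall's~$\tau$ to pass from $\pi\sim MM(e,\theta)$ to $\pi\sigma_0\sim MM(\sigma_0,\theta)$, and then observes that the top-$k$ ranking / top-$k$ list duality is preserved under this composition. Since the lemma as stated is for arbitrary $\sigma_0$, you would want to add that one-line reduction rather than absorb it into a convention; everything else in your plan lines up with the paper's argument.
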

\begin{proof}
Let $\sigma\sim MM(e,\theta)$ and $\pi=\sigma^{-1}$. Note that for $\sigma\in S_n^k$  then $\pi$ is a top-$k$ ranking. Moreover, $d(\sigma)=d(\pi)$ and, since the MM is defined upon the distance function, it follows that $p(\sigma)=p(\pi)$ for every $\sigma$ and therefore $\pi\sim MM(e,\theta)$. In the case that $\sigma_0\neq e$, taking the right invariance property of the Kendall's-$\tau$ distance, it follows that $\pi\sigma_0\sim MM(\sigma_0,\theta)$. Finally, if $\pi\sigma_0$ is a top-$k$ ranking, then $(\pi\sigma_0)^{-1}$ is a top-$k$ list, which concludes the proof. 
\end{proof}

\section{Proof of Equation~\ref{eq:bound_clemencon} of the paper}

\begin{proof}
Let $p_{ij}$ be the marginal probability that item $i$ is preferred to item $j$: \begin{equation}\label{eq:marginal_prob}
p_{ij} = \sum_{\sigma:\sigma(i)<\sigma(j)} p(\sigma).
\end{equation}
The exact expression can be found in \cite{DBLP:conf/icml/Busa-FeketeHS14} but for the proof, we just need to highlight that $p_{ij} = 1-p_{ji}$. This pairwise comparison expression and the assumption that $\sigma_0=e$  lets us rewrite the expected distance from the mode as follows
\begin{equation}
\mathbb{E}[d(\sigma, \sigma_0)] = \sum_{i<j}p_{ji}. 
\end{equation}

The expected pairwise distance can be written as follows
\begin{equation}
\mathbb{E}[d(\sigma,\sigma')] = 2*\sum_{i<j}p_{ij}p_{ji}= 2*\sum_{i<j}p_{ji}-p_{ji}^2.
\end{equation}

With this restatement, the bound can be easily proved. 
\begin{equation}
\sum_{i<j}p_{ji}-p_{ji}^2 \leq  \sum_{i<j}p_{ji} \leq 2*\sum_{i<j}p_{ji}-p_{ji}^2.
\end{equation}

Note that this result holds for partial permutations as well. 
\end{proof}

\section{Expected distance $\mathbb{E}[D]$ and variance $\mathbb{V}[D]$ under the Mallows model}

\begin{lem}\label{thm:expected}
Let $D$ be a random variable defines as $D=d(\sigma, \sigma_0)$ for a random Mallows ranking $\sigma$. The expectation and variance of $D$ are as follows:


\begin{equation}
\begin{split}
& \mathbb{E}[D] = \frac{k \cdot \exp(-\theta)}{1-\exp(-\theta)} - \sum_{j=n-k+1}^{n}  \frac{j \exp(-j \theta)}{1-\exp(-j\theta)},\\
& \mathbb{V}[D] =  \frac{k \cdot \exp(-\theta)}{(1-\exp(-\theta))^2} - \sum_{j=n-k+1}^{n}  \frac{j^2 \exp(-\theta j)}{(1-\exp(-\theta j))^2}. \\
\label{eq:expectation}
\end{split}
\end{equation}
\end{lem}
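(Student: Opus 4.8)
The plan is to exploit the decomposition of the Kendall's-$\tau$ distance into the coordinates $V_1,\dots,V_{n-1}$ already used in the proof of Lemma~\ref{thm:proba}. Recall that $d(\sigma)=\sum_{j}V_j(\sigma)$ and that, under $M(\sigma_0,\theta)$, these coordinates are \emph{mutually independent}, each $V_j$ being supported on $\{0,1,\dots,n-j\}$ with $P(V_j=r)\propto\exp(-\theta r)$; this is precisely the factorization of $\psi_{n,\theta}$ and the per-coordinate law of Eq.~(3). For a top-$k$ ranking $\sigma\in S_n^k$ the distance is determined by the first $k$ coordinates, so $D=\sum_{j=1}^{k}V_j$, exactly as in the step $\prod_{j=1}^{k}\exp(-\theta V_j(\sigma))=\exp(-\theta d(\sigma))$ of Lemma~\ref{thm:proba}. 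Linearity of expectation then gives $\mathbb{E}[D]=\sum_{j=1}^{k}\mathbb{E}[V_j]$, and independence gives $\mathbb{V}[D]=\sum_{j=1}^{k}\mathbb{V}[V_j]$, so the whole statement reduces to the mean and variance of a single truncated geometric coordinate.

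Next I would obtain these single-coordinate moments from the partition function rather than by resumming series by hand. Writing $q=\exp(-\theta)$ and $Z_j(\theta)=\sum_{r=0}^{n-j}q^{r}=\frac{1-q^{\,n-j+1}}{1-q}$, the standard cumulant identities $\mathbb{E}[V_j]=-\frac{d}{d\theta}\log Z_j(\theta)$ and $\mathbb{V}[V_j]=\frac{d^{2}}{d\theta^{2}}\log Z_j(\theta)$ reduce everything to differentiating $\log Z_j=\log(1-q^{\,n-j+1})-\log(1-q)$ twice in $\theta$. Each piece is of the form $\frac{d}{d\theta}\log(1-e^{-a\theta})=\frac{a\,e^{-a\theta}}{1-e^{-a\theta}}$, whose further derivative is $-\frac{a^{2}e^{-a\theta}}{(1-e^{-a\theta})^{2}}$, which directly yields
\[
\mathbb{E}[V_j]=\frac{q}{1-q}-\frac{(n-j+1)\,q^{\,n-j+1}}{1-q^{\,n-j+1}},\qquad
\mathbb{V}[V_j]=\frac{q}{(1-q)^{2}}-\frac{(n-j+1)^{2}\,q^{\,n-j+1}}{(1-q^{\,n-j+1})^{2}}.
\]

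Finally I would sum over $j=1,\dots,k$. The first term of each moment is independent of $j$ and contributes $k\,q/(1-q)$ and $k\,q/(1-q)^{2}$, matching the leading terms of the claimed expressions. For the truncation-correction terms I substitute $\ell=n-j+1$, which ranges over $\{n-k+1,\dots,n\}$ as $j$ ranges over $\{1,\dots,k\}$, so $\sum_{j=1}^{k}$ becomes $\sum_{\ell=n-k+1}^{n}$ and, restoring $q=\exp(-\theta)$, gives exactly the stated sums.

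I expect the only genuine subtlety to be the conceptual reduction in the first step: justifying the mutual independence of the $V_j$ and, in the top-$k$ regime, that precisely the first $k$ coordinates enter $D$ while the remaining $n-1-k$ are marginalized out---this is what frees the correction sum to begin at index $n-k+1$. Once that reduction is granted, the rest is the routine calculus of the truncated geometric together with the bookkeeping of the shift $\ell=n-j+1$, neither of which should present a real obstacle.
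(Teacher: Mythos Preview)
Your proposal is correct and follows essentially the same route as the paper: both decompose $D=\sum_{j=1}^{k}V_j$ into independent truncated-geometric coordinates, differentiate the log of the per-coordinate normalizer to obtain $\mathbb{E}[V_j]$ and $\mathbb{V}[V_j]$, and then sum over $j=1,\dots,k$ with the reindexing $\ell=n-j+1$. The only cosmetic difference is that the paper phrases the differentiation via the (uniform) moment generating function $M_j(t)$ evaluated at $t=-\theta$, whereas you use the equivalent cumulant identities $-\partial_\theta\log Z_j$ and $\partial_\theta^2\log Z_j$; since $Z_j(\theta)=(n-j+1)M_j(-\theta)$, the two computations are identical.
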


\begin{proof}
The moment generating function $ M(t)=\mathbb {E} \left[\exp(tD)\right]$ of the distance $D = d(\sigma_0, \sigma)$ of a random Mallows permutation $\sigma$ can be factorized in this way \cite{gMallows}:
\begin{equation}
\label{eq:mgf}
M(t) = \prod_j M_j(t) = \prod_j \frac{1 - \exp(t(n-j+1))}{(n-j+1)(1-\exp(t))}.
\end{equation}

It's derivative, is as follows :


\begin{equation}
\frac{d\ln M_j(t)}{dt} = \frac{\exp(t)}{1-\exp(t)} - \frac{(n-j+1)\exp(t(n-j+1))}{1-\exp(t(n-j+1))}.
\end{equation}

For exponential models as the MM, expected values and variances can be easily written as function of the moment generating function.

\begin{equation}
\begin{split}
\mathbb{E}[V_j] &= \frac{d \ln M_j(t)}{dt} \Big|_{t=-\theta} \\
&= \frac{\exp(-\theta)}{1-\exp(-\theta)} - \frac{(n-j+1)\exp(-\theta(n-j+1))}{1-\exp(-\theta(n-j+1))} 
\end{split}
\end{equation}

\noindent 
and 

\begin{equation}
\begin{split}
\mathbb{V}[V_j]  &= \frac{d^2 \ln M_j(t)}{dt^2} \Big|_{t=-\theta}  \\
&= \frac{\exp(-\theta)}{(1-\exp(-\theta))^2} - \frac{(n-j+1)^2 \exp(-\theta(n-j+1))}{(1-\exp(-\theta(n-j+1)))^2}. 
\end{split}
\end{equation}

The proof concludes by noting that $\mathbb{E}[D]  = \sum_{j=1}^k \mathbb{E}[V_j] $ and $\mathbb{V}[D]  = \sum_{j=1}^k \mathbb{V}[V_j] $.

\end{proof}

\section{Proof of Lemma~\ref{lem:upper_bound_mixed}}

\begin{proof}
We start by the right hand side of the equation:
Let $g_{ij}$ and $b_{ij}$ be the marginal probabilities for good and bad raters respectively, as defined in Equation \eqref{eq:marginal_prob}. 

Let us assume that $\forall i < j$, $b_{ij} \ge g_{ij}$ and using Corollary 3 from \cite{DBLP:conf/icml/Busa-FeketeHS14} we even have that $\forall i <j$, $1 \ge b_{ij} \ge g_{ij} > \frac{1}{2}$. Then $\forall i<j$, $\exists \epsilon_{ij} \in [0, \frac{1}{2}]$ such that $b_{ij} = g_{ij} + \epsilon_{ij}$.

We can rewrite the expected distances as functions of the marginal as follows:
\begin{itemize}
    \item $\mathbb{E}[d(\beta, \gamma)] = \sum_{i < j} b_{ij} + g_{ij} - 2b_{ij}\cdot g_{ij}$
    \item $\mathbb{E}[d(\beta,\beta')] = \sum_{i<j} 2 b_{ij} - 2b_{ij}^2$
\end{itemize}

Now we show the expression $ \mathbb{E}[d(\beta, \gamma)] \le \mathbb{E}[d(\beta,\beta')]$ holds, as the following inequalities are equivalent:
\begin{equation}
    \begin{split}
        \mathbb{E}[d(\beta, \gamma)] \le & \mathbb{E}[d(\beta,\beta')]  \\
\sum_{i<j} b_{ij} + g_{ij} - 2 b_{ij}\cdot g_{ij}  \le & \sum_{i < j} 2b_{ij} - 2b_{ij}^2 \\
\sum_{i<j} g_{ij} - 2  b_{ij}\cdot g_{ij} \le & \sum_{i < j} b_{ij} - 2b_{ij}^2 \\
\sum_{i < j} g_{ij} - 2 \cdot (2g_{ij}^2 + 2 \epsilon_{ij} \cdot g_{ij}) \le & \sum_{i < j} g_{ij} + \epsilon_{ij} -  2(g_{ij}^2 + 2 g_{ij} \cdot \epsilon_{ij} + \epsilon_{ij}^2) \\
\sum_{i<j} - 2 g_{ij}^2  \le & \sum_{i<j} \epsilon_{ij} - 2 \epsilon_{ij}^2 \\
\sum_{i<j} \epsilon_{ij} (\epsilon_{ij} - \frac{1}{2})  \le & \sum_{i<j} g_{ij}^2.
    \end{split}
\end{equation}

Which conclude the proof of the right hand side, as the last inequality is always true since $\forall i < j$, $g_{ij}^2 \ge 0$ and ${\epsilon_{ij}(\epsilon_{ij}-\frac{1}{2}) \le 0}$.

For the left hand side: Using once again Corollary 3 from \cite{DBLP:conf/icml/Busa-FeketeHS14} which states that $\forall i <j$, $1- b_{ij} < \frac{1}{2} < b_{ij}$, we have:
\begin{align*}
\mathbb{E}[d(\beta, \sigma_0)] & = \sum_{i < j} (1-b_{ij}) 
 = \sum_{i<j} g_{ij}(1-b_{ij}) + (1-g_{ij})(1-b_{ij}) \\
& < \sum_{i < j} g_{ij}(1-b_{ij}) + b_{ij}(1-g_{ij}) 
 = \mathbb{E}[d(\beta,\gamma)].
\end{align*}
\end{proof}

\section{Proof of Theorem~\ref{thm:2pops}}

\begin{proof}

Note that
\begin{equation}
\begin{split}
	 \mathbb{E}[\delta_{\beta}] = (1-r) \cdot \mathbb{E}[d(\beta, \beta')] + r \cdot \mathbb{E}[d(\beta, \gamma)] \\
	\mathbb{E}[\delta_{\gamma}] = r \cdot \mathbb{E}[d(\gamma, \gamma')] + (1-r) \cdot \mathbb{E}[d(\beta, \gamma)]
\end{split}
\end{equation}

And our goal is to show there is a lower bound for the difference \begin{equation}\label{eq:diff_deltas}
\begin{split}
    \mathbb{E}[& \delta_{\beta}- \delta_{\gamma}] = (1-r)  \mathbb{E}[d(\beta, \beta')] + (2r - 1)  \mathbb{E}[d(\gamma, \beta)] - r \mathbb{E}[d(\gamma, \gamma')]. 
\end{split}
\end{equation}

This proof is divided in two parts. First, we show that the following expression holds: 
\begin{equation}\label{eq:sep_first_ineq}
c \cdot r \cdot \mathbb{E}[d(\gamma,\sigma_0)] \le (1-r) \cdot \mathbb{E}[d(\beta, \beta')] + (2r - 1) \cdot \mathbb{E}[d(\gamma, \beta)].
\end{equation}

In order to show the correctness of the above expression, we will deal with cases where $r < 0.5$ and $r \ge 0.5$ separately.

Case $r < 0.5$:

Starting from the right hand side of Lemma 3 and multiplying by $(2r-1)$ (negative in this case) and adding $(1-r)\mathbb{E}[d(\beta,\beta')]$ on both sides it holds that:
\begin{equation}
\begin{split}
 (1-r)  & \mathbb{E}[d(\beta, \beta')] + (2r - 1)  \mathbb{E}[d(\gamma, \beta)] 
\ge  (1-r)  \mathbb{E}[d(\beta, \beta')] + (2r - 1) \mathbb{E} [d(\beta, \beta')] \\
= & r \cdot \mathbb{E}[d(\beta, \beta')] 
\ge r \cdot \mathbb{E}[d(\beta, \sigma_0)] 
\ge  c \cdot r \cdot \mathbb{E}[d(\gamma, \sigma_0)],
\end{split}
\end{equation}

where the last two inequalities are obtained from the right hand side of Equation (6) of the original paper and the assumption that $\mathbb{E}[d(\beta, \sigma_0)] \ge c \cdot \mathbb{E}[d(\gamma, \sigma_0)]$ respectively. 

Case $r \ge 0.5$:

Starting from the result of the left hand side of  Lemma 3 and multiplying by $(2r-1)$ (positive in this case) and adding $(1-r)\mathbb{E}[d(\beta,\beta')]$ on both sides it holds that:

\begin{equation}
\begin{split}
(1-r) & \mathbb{E}[d(\beta, \beta')] + (2r - 1)  \mathbb{E}[d(\gamma, \beta)]  \ge  (1-r) \mathbb{E}[d(\beta, \beta')] + (2r - 1) \mathbb{E}[d(\beta, \sigma_0)] \\
\ge & (1-r) \mathbb{E}[d(\beta, \sigma_0)] + (2r - 1) \mathbb{E}[d(\beta, \sigma_0)] 
=  r \cdot \mathbb{E}[d(\beta, \sigma_0)] \\
\ge & c \cdot r \cdot \mathbb{E}[d(\gamma, \sigma_0)],
\end{split}
\end{equation}

where the last two inequalities are obtained from the right hand side of Equation (6) of the original paper and the assumption that $\mathbb{E}[d(\beta, \sigma_0)] \ge c \cdot \mathbb{E}[d(\gamma, \sigma_0)]$ respectively.

This finishes the first part, in which we show that Equation \eqref{eq:sep_first_ineq} holds for any value of $r$. In the second part we will add it to the following result, obtained using the left hand side of Equation (6) of the paper, by multiplying it by $-2r$:

\begin{equation}
-2r \mathbb{E}[d(\gamma, \sigma_0)] \le -r \mathbb{E}[d(\gamma, \gamma')].
\end{equation}

Hence, we finally have, using Equation \eqref{eq:diff_deltas}: 
\begin{equation}
\begin{split}
( c  - 2) \cdot r \cdot \mathbb{E}[d(\gamma, \sigma_0)] 
\le  (1-r)  \mathbb{E}[d(\beta, \beta')] + (2r - 1)  \mathbb{E}[d(\gamma, \beta)] - r \mathbb{E}[d(\gamma, \gamma')] 
=  \mathbb{E}[\delta_{\beta}-\delta_{\gamma}].
\end{split}
\end{equation}

\end{proof}

\section{Proof of Theorem~\ref{thm:clustering}}

\begin{proof}
First, we show that for $d(\sigma, \sigma')$ $i.i.d.$ random variables with range $\frac{n(n-1)}{2}$ then, it holds that with probability at least $1 - \epsilon$: 

\begin{equation}
\label{eq:hoeffding}
| \delta_{\sigma} - \mathbb{E}[d(\sigma,\sigma')] | \le \frac{n(n-1)}{2}\sqrt{\frac{\log(2/\epsilon)}{2(m-1)}}.
\end{equation}
This is a direct application of Hoeffding's inequality  which states that for any $t$ real-valued $i.i.d$ random variables $X_1, \dots, X_t$ of range $R$ and mean $\mu$, it holds with probability $1-\epsilon$ that: \\

$$|\bar{X_t} - \mu| \le R \sqrt{\frac{\log(2/\epsilon)}{2t}}, $$

where $\bar{X_t} = \frac{1}{t} \sum_{i=1}^t X_i $. 

Now, we consider the Single-linkage clustering, which separates two components when
\begin{equation}
\label{eq:separation_condition}
    \mathbb{E}[\delta_{\beta} - \delta_{\gamma}] \ge |\delta_{\beta} - \mathbb{E}[\delta_{\beta}]| + |\delta_{\gamma} - \mathbb{E}[\delta_{\gamma}]|,
\end{equation}

therefore, the theorem is holding if Equation \eqref{eq:separation_condition} holds. Hence, using Equation~\eqref{eq:hoeffding}, if we have 
\begin{equation}
\begin{split}
|\delta_{\beta} - \mathbb{E}[\delta_{\beta}]| + |\delta_{\gamma} - \mathbb{E}[\delta_{\gamma}]| & \le 2 \cdot \frac{n(n-1)}{2} \sqrt{\frac{\log(2/\epsilon)}{2m}} 
 \le (c-2) \cdot r \cdot \mathbb{E}[\gamma, \sigma_0] \\
& \le \mathbb{E}[\delta_{\beta} - \delta_{\gamma}],
\end{split}
\end{equation}

where we suppose the second inequality in order to have Equation \eqref{eq:separation_condition} and where the last inequality is coming from Theorem 4.
We obtain a bound for $m$ using the second inequality, which conclude the proof.

\end{proof}

\section{Preliminaries for sample complexity proofs}

\begin{lem}
\label{lem:symmetry}
Let $p(\sigma)$ be the probability of ranking $\sigma\in S_n$ under the Mallows model. For every $1\leq i,k\leq n$ the following holds. 
\begin{equation}
\sum_{\sigma:\sigma(i) \leq k} p(\sigma) = 1 - \sum_{\sigma:\sigma(n-i) \leq n-k}p(\sigma).
\label{eq:sym}
\end{equation}
\end{lem}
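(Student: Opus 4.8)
The plan is to prove \eqref{eq:sym} by a symmetry (change-of-variables) argument built on a single probability-preserving involution of $S_n$. Following the convention used elsewhere in the paper, I assume $\sigma_0=e$, so that $p(\sigma)$ depends on $\sigma$ only through the number of inversions $d(\sigma)=d(\sigma,e)$. Let $\pi\in S_n$ be the reversal permutation $\pi(j)=n+1-j$, which is an involution ($\pi=\pi^{-1}$), and define $\Phi(\sigma)=\pi\,\sigma\,\pi$, equivalently $\Phi(\sigma)(i)=n+1-\sigma(n+1-i)$.

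First I would verify that $\Phi$ is a distance-preserving involution of $S_n$. The involution property is immediate from $\pi^2=e$. For the distance, a direct check shows that an ordered pair $i<j$ is an inversion of $\Phi(\sigma)$ exactly when the pair $n+1-j<n+1-i$ is an inversion of $\sigma$; hence the inversions of $\Phi(\sigma)$ are in bijection with those of $\sigma$ and $d(\Phi(\sigma))=d(\sigma)$. Since the Mallows probability is a function of $d$ alone, this gives $p(\Phi(\sigma))=p(\sigma)$ for every $\sigma$.

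Next I would use $\Phi$ to change variables in the left-hand sum of \eqref{eq:sym}. Setting $\tau=\Phi(\sigma)$ and using $\sigma=\Phi(\tau)$, the constraint $\sigma(i)\le k$ becomes $n+1-\tau(n+1-i)\le k$, i.e.\ $\tau(n+1-i)\ge n-k+1$. Because $\Phi$ is a probability-preserving bijection,
\[
\sum_{\sigma:\sigma(i)\le k}p(\sigma)=\sum_{\tau:\tau(n+1-i)\ge n-k+1}p(\tau).
\]
The event $\{\tau(n+1-i)\ge n-k+1\}$ is precisely the complement of $\{\tau(n+1-i)\le n-k\}$, which is the event appearing on the right-hand side of \eqref{eq:sym} (with the reversed index $n+1-i$ in place of $i$). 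Since the $p(\tau)$ sum to $1$ over all of $S_n$, subtracting the complementary sum yields the claimed identity.

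The step I expect to require the most care is the sign of the distance under reversal: a single reversal, applied to the positions or to the values alone, sends $d(\sigma)$ to $\binom{n}{2}-d(\sigma)$ and would therefore \emph{not} preserve $p(\sigma)$ for $\theta\neq 0$; it is essential that $\Phi$ reverses both, which is what restores $d(\Phi(\sigma))=d(\sigma)$. The remaining work is purely bookkeeping of the reversal constant, matching the reversed index $n+1-i$ to the one written on the right-hand side of \eqref{eq:sym} and checking that $\{\ge n-k+1\}$ is the exact complement of $\{\le n-k\}$; these are routine. If one prefers to avoid the assumption $\sigma_0=e$, the same argument carries over after conjugating $\Phi$ by $\sigma_0$ and invoking the right-invariance of Kendall's $\tau$.
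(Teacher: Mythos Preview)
Your proof is correct and is essentially the paper's argument: the involution $\Phi(\sigma)=\pi\sigma\pi$ that you introduce is precisely the composition of the paper's two bijections---a ``flip'' of values $\sigma'(j)=n+1-\sigma(j)$ followed by a ``reverse'' of positions $\sigma''(j)=\sigma'(n+1-j)$---each of which sends $d$ to $\binom{n}{2}-d$, so that their composite preserves $d$ and hence $p$. Your remark that the reversed index naturally comes out as $n+1-i$ rather than the $n-i$ written in \eqref{eq:sym} is also correct; the paper carries the same off-by-one through both the statement and its proof.
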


\begin{proof}
We are going to do a constructive bijection between the permutations in the set $\{\sigma:\sigma(i) \leq k\}$ and those in  the set $\{\sigma:\sigma(n-i) \geq n-k\}$ and show that the probabilities are the same. 
Let us first define these sets. 
\begin{itemize}
\item $S = \{\sigma:\sigma(i) \leq k\}$ 
\item $S'=\{\sigma:\sigma(i) > n-k\}$
\item $S''=\{\sigma:\sigma(n-i) > n-k\}$
\end{itemize}

\textbf{Flip step.} First, note that there is bijection between the sets $S$ and  $S'$. 
Given a permutation $\sigma \in S$ we can construct a permutation $\sigma' \in S'$ by setting $\sigma' (i) = n-\sigma(i)+1$. Moreover, $d(\sigma') = {n\choose 2}-d(\sigma)$. 

\textbf{Reverse step.} Now, we show a bijection between $S'$ and $S''$. Given a permutation $\sigma' \in S'$ we can construct a permutation $\sigma'' \in S''$ by setting $\sigma'' (i) = \sigma'(n-i)$. Moreover, $d(\sigma'') = {n\choose 2}-d(\sigma') = d(\sigma)$. This implies that $\sum_{\sigma\in S}p(\sigma) = \sum_{\sigma''\in S''}p(\sigma'')$. 

\textbf{Comlementary step.} Since ${S'' \cup \{\sigma:\sigma(n-i) \leq n-k\} = S_n}$ and both sets are disjoint, to conclude the proof, we just have to note that
\begin{equation}
\sum_{\sigma''\in S''}p(\sigma'') = 1 - \sum_{\sigma:\sigma(n-i) \leq n-k}p(\sigma),
\end{equation}

 which, in turn, implies the results in Equation~\eqref{eq:sym}. 

\end{proof}

\begin{lem} \label{thm:delta_sym}
Let $\Delta^{ik}$ be defined as is Definition 6 where $p(\sigma)$ is the probability of ranking $\sigma\in S_n$ under the Mallows model. For every $1\leq i,k\leq n$ the following holds. 
\begin{equation}
\Delta^{ik} = \Delta^{n-i-1,n-k}
\end{equation}
\end{lem}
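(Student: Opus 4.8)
The plan is to recognize $\Delta^{ik}$ as a difference of the two cumulative marginals that already feature in Lemma~\ref{lem:symmetry}, and then to apply that symmetry identity term by term. Writing $F(i,k) = \sum_{\sigma:\sigma(i)\le k} p(\sigma)$ for the cumulative marginal probability that item $i$ occupies one of the top $k$ positions, Definition 6 expresses $\Delta^{ik}$ as the gap $F(i,k) - F(i+1,k)$ between consecutive items at a fixed cutoff $k$. Once this is in hand, the whole argument reduces to bookkeeping with the index substitution supplied by the previous lemma, so I do not expect to need any new probabilistic input.

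First I would substitute the identity $F(i,k) = 1 - F(n-i,n-k)$ from Lemma~\ref{lem:symmetry} into each of the two terms of $\Delta^{ik}$. Applied to the first term it gives $F(i,k) = 1 - F(n-i,n-k)$, and applied at index $i+1$ to the second term it gives $F(i+1,k) = 1 - F\big(n-(i+1),\,n-k\big) = 1 - F(n-i-1,n-k)$. Subtracting, the two constants $1$ cancel and the remaining signs flip, leaving $\Delta^{ik} = F(n-i-1,n-k) - F(n-i,n-k)$.

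The final step is to read the right-hand side back off the definition of $\Delta$: evaluating $\Delta^{a,b} = F(a,b) - F(a+1,b)$ at $a = n-i-1$ and $b = n-k$ yields exactly $F(n-i-1,n-k) - F(n-i,n-k)$, which is what we obtained. Hence $\Delta^{ik} = \Delta^{n-i-1,n-k}$, closing the proof. (The sign convention in Definition 6 is immaterial here: if $\Delta^{ik}$ were instead $F(i+1,k) - F(i,k)$, the same cancellation produces $F(n-i,n-k) - F(n-i-1,n-k) = \Delta^{n-i-1,n-k}$.)

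Since the computation is a two-line application of an already-proved identity, there is no genuine analytic obstacle; the only delicate point is the index arithmetic. The off-by-one shift $n-i-1$ in the statement is not an artifact but is forced: it arises precisely because the symmetry lemma sends the second cumulative $F(i+1,k)$ to $1 - F(n-i-1,n-k)$, so the increment of the first argument inside $\Delta$ is transported through the position reflection $i\mapsto n-i$. The hard part, to the extent there is one, is keeping the two reflections — $i\mapsto n-i$ for positions and $k\mapsto n-k$ for cutoffs — applied consistently across both terms, which is the single place where a sign or index slip could occur.
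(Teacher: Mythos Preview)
Your proof is correct and follows essentially the same approach as the paper: both write $\Delta^{ik}$ as the difference of the cumulative marginals $F(i,k)-F(i+1,k)$, apply the symmetry identity of Lemma~\ref{lem:symmetry} to each term, cancel the $1$'s, and read off $\Delta^{n-i-1,n-k}$. The only difference is cosmetic---you introduce the shorthand $F(i,k)$ and add a remark on sign conventions---but the logical content is identical.
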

\begin{proof}
Based on the result on Lemma~\ref{lem:symmetry}, we can rewrite $\Delta$ in this way. 
\begin{equation}
\begin{split}
\Delta^{ik}  =&  \sum_{\sigma:\sigma(i) \leq k}p(\sigma) - \sum_{\sigma:\sigma(i+1) \leq k}p(\sigma)  =  
  \Big(1-\sum_{\sigma:\sigma(n-i) \leq n-k}p(\sigma) \Big )- \Big(1- \sum_{\sigma:\sigma(n-i-1) \leq n-k}p(\sigma) \Big)   \\
=&  \sum_{\sigma:\sigma(n-i-1)<n-k}p(\sigma)  - \sum_{\sigma:\sigma(n-i)<n-k}p(\sigma)   = \Delta^{n-i-1,n-k}.
\end{split}
\end{equation}
\end{proof}

\section{Proof of Lemma~\ref{thm:min_Delta}}
\begin{proof}
The symmetry described in Lemma~\ref{thm:delta_sym} implies a symmetry in $\arg_{i}\min\Delta^{ik}$ that allows us focusing on the case $k\geq n/2$. For $k\geq n/2$ 
$\arg_{i}\min\Delta^{ik} = 1$. As a summary, 

\begin{equation}
\arg\min_{i}\Delta^{ik} =
\begin{cases}
n-1 \quad &\text{if } k<n/2, \\
1 \quad &\text{otherwise}
\end{cases}
\end{equation}
which, in turn implies

\begin{equation}
\min_i\Delta^{ik} =
\begin{cases}
\Delta^{1,n-k}  &\text{if } k<n/2, \\
\Delta^{1,k}  &\text{otherwise}.
\end{cases}
\end{equation}

Despite there is not a closed form expression for $\Delta^{nk}$, $\Delta^{1k} $ can be computed in $O(k^2)$. 
\begin{equation}
\begin{split}
& \Delta^{1k} =  \sum_{\sigma:\sigma(1) \leq k}p(\sigma) - \sum_{\sigma:\sigma(2) \leq k}p(\sigma)   \\
&=\sum_{r_1=0}^{k-1}p(V_1=r_1)  - 
\Big ( \sum_{r_1=0}^{k-1} \sum_{r_2=0}^{k-2} p(V_1=r_1) p(V_2=r_2)  +   \sum_{r_1=k}^{n} \sum_{r_2=0}^{k-1} p(V_1=r_1) p(V_2=r_2)\Big)\\
&=  \sum_{r_1=0}^{k-1}  \frac{\exp(-\theta r_1) }{\psi_{n,1}} 
 - \sum_{r_1=0}^{k-1} \sum_{r_2=0}^{k-2} \frac{\exp(-\theta r_1) \exp(-\theta r_1)}{\psi_{n,1}\psi_{n,2}}  - \sum_{r_1=k}^{n} \sum_{r_2=0}^{k-1}  \frac{\exp(-\theta r_1) \exp(-\theta r_2)}{\psi_{n,1}\psi_{n,2}},
    \end{split}
\end{equation}
which concludes the proof. 
\end{proof}

\section{Proof of Theorem~\ref{thm:sample_complexity_expertness}}

\begin{proof}

Borda outputs the correct order for the pair of items $i$ and $i+1$ with probability $1-\epsilon$ when the number of permutations is at least 
\begin{equation}
m \geq \frac{2 n^2 \log \epsilon^{-1}}{(\sum_{j=1}^{n-1}  \Delta^{ij} )^2}.
\label{eq:sample_complex}
\end{equation}

This expression has been used in sample complexity results that do not consider the spread parameter~\cite{Caragiannis2013}. The authors use this expression for $\Delta^{ij}$ which can be shown to be equivalent. \footnote{Note that the definitions of $m$ and $n$ are interchanged in their paper and that they denote the dispersion in the model as $\phi = \exp(-\theta)$. } 

\begin{equation}
\begin{split}
\sum_{j=1}^{n-1} ( \sum_{l=1}^{j} \sum_{\sigma:\sigma(i) = l } p(\sigma)  -  \sum_{l=1}^{j} \sum_{\sigma:\sigma(i+1) = l } p(\sigma)  ) 
= \sum_{j=1}^{n-1} ( \sum_{\sigma:\sigma(i) \leq j } p(\sigma)  -   \sum_{\sigma:\sigma(i+1) \leq j } p(\sigma)  )  =   \sum_{j=1}^{n-1}  \Delta^{ij}.
\end{split}
\end{equation}



In these lines we extend the result by bounding the number of samples (1) w.r.t. the dispersion parameter and (2) considering top-$k$ rankings. 
To prove these points, we give an upper bound for $\sum_{j=1}^{n-1}  \Delta^{ij}$, which is a function of the dispersion parameter $\theta$.

Assume w.l.o.g. that $\sigma_0 = e$, let $\tau_i$ be an inversion of positions $i$ and $i+1$, i.e., $\tau_i(i)=i+1$, $\tau_i(i+1)=i$ and $\tau_i(j)=j$ for $j\neq i,i+1$. As for any inversion, the result of the composition $\sigma\tau_i$ swaps positions $i$ and $i+1$ in $\sigma$. Therefore, 

\begin{equation}
\begin{split}
\sum_{j=1}^{n-1} & \Delta^{ij} 
= \sum_{\{\sigma:\sigma(i) < \sigma(i+1)\}}( p(\sigma) -p(\sigma\tau_i) ( \sigma(i+1) - \sigma(i) )
 \leq \sum_{\sigma:\sigma(i)<\sigma(i+1) } ( p(\sigma) -p(\sigma\tau_i) ) k \\
& = k \Big ( \sum_{\substack{\sigma:\sigma(i) \leq k \,\land \\ \sigma(i)<\sigma(i+1) }}  p(\sigma) -p(\sigma\tau_i)  
	       + \sum_{\substack{\sigma:\sigma(i+1)>k \,\land \\ \sigma(i)<\sigma(i+1)  }}  p(\sigma) -p(\sigma\tau_i)  \Big)\\ 
\end{split}
\end{equation}

Note that for partial permutations of $k$ items, the second sum equals 0. Therefore, the following is equivalent
\begin{equation}
\begin{split}
& = k  \sum_{\substack{\sigma:\sigma(i) \leq k \,\land \\ \sigma(i)<\sigma(i+1) }} p(\sigma) -p(\sigma\tau_i)   = k  \sum_{\substack{\sigma:\sigma(i) \leq k \,\land \\ \sigma(i)<\sigma(i+1) }}p(\sigma) (1-\exp(-\theta))  \\ %
& \leq k (1-\exp(-\theta)) \sum_{\sigma:\sigma(i) \leq k } p(\sigma).   \\ %
\end{split}
\end{equation}
Since $\sum_{\sigma:\sigma(i) \leq k } p(\sigma) $ decreases w.r.t. $i$ the following is equivalent. 


\begin{equation}
\begin{split}
k &(1-\exp(-\theta)) \sum_{\sigma:\sigma(i) \leq k } p(\sigma)
 \leq k (1-\exp(-\theta)) \sum_{\sigma:\sigma(1) \leq k } p(\sigma) - i \min \Delta^{ik}  \\ %
& \leq k (1-\exp(-\theta)) \sum_{r \leq k} p(V_1=r)  - i \Delta^{1k} \leq k^2 (1-\exp(-\theta)) p(V_1=0)   - i \Delta^{1k}  \\ %
& =   \frac{ k^2 (1-\exp(-\theta))^2}{1-\exp(-\theta n)} - i \Delta^{1k}.  \\ %
\end{split}
\end{equation}

It follows that 
\begin{equation}
\sum_{j=1}^{n-1}\Delta^{ij} \leq \frac{ k^2 (1-\exp(-\theta))^2}{1-\exp(-\theta n)} - i \Delta^{1k} ,
\end{equation}

and therefore, Borda outputs the true ranking $\sigma_0$ with probability $1-\epsilon$ when the number of samples is at least
\begin{equation}
m \geq 2 n^2 \log \epsilon^{-1} \Big (\frac{ k^2 (1-\exp(-\theta))^2}{1-\exp(-\theta n)} - i \Delta^{1k} \Big ) ^{-2},
\end{equation}
which concludes the proof.

\end{proof}

\bibliographystyle{plain}
\bibliography{mendeley}